\title{CluCERT: Certifying LLM Robustness via Clustering-Guided Denoising Smoothing}
\author{
    Zixia Wang, Gaojie Jin, Jia Hu, Ronghui Mu\thanks{Corresponding Author}
}
\begin{document}

\maketitle

\begin{abstract}
Recent advancements in Large Language Models (LLMs) have led to their widespread adoption in daily applications. Despite their impressive capabilities, they remain vulnerable to adversarial attacks, as even minor meaning-preserving changes such as synonym substitutions can lead to incorrect predictions. As a result, certifying the robustness of LLMs against such adversarial prompts is of vital importance. Existing approaches focused on word deletion or simple denoising strategies to achieve robustness certification. However, these methods face two critical limitations: (1) they yield loose robustness bounds due to the lack of semantic validation for perturbed outputs and (2) they suffer from high computational costs due to repeated sampling. To address these limitations, we propose CluCERT, a novel framework for certifying LLM robustness via clustering-guided denoising smoothing. Specifically, to achieve tighter certified bounds, we introduce a semantic clustering filter that reduces noisy samples and retains meaningful perturbations, supported by theoretical analysis. Furthermore, we enhance computational efficiency through two mechanisms: a refine module that extracts core semantics, and a fast synonym substitution strategy that accelerates the denoising process. Finally, we conduct extensive experiments on various downstream tasks and jailbreak defense scenarios. Experimental results demonstrate that our method outperforms existing certified approaches in both robustness bounds and computational efficiency.

\end{abstract}

\section{Introduction}

\begin{figure}[ht]
    \centering
    \includegraphics[width=0.4\textwidth ,trim=180 50 200 30, clip]{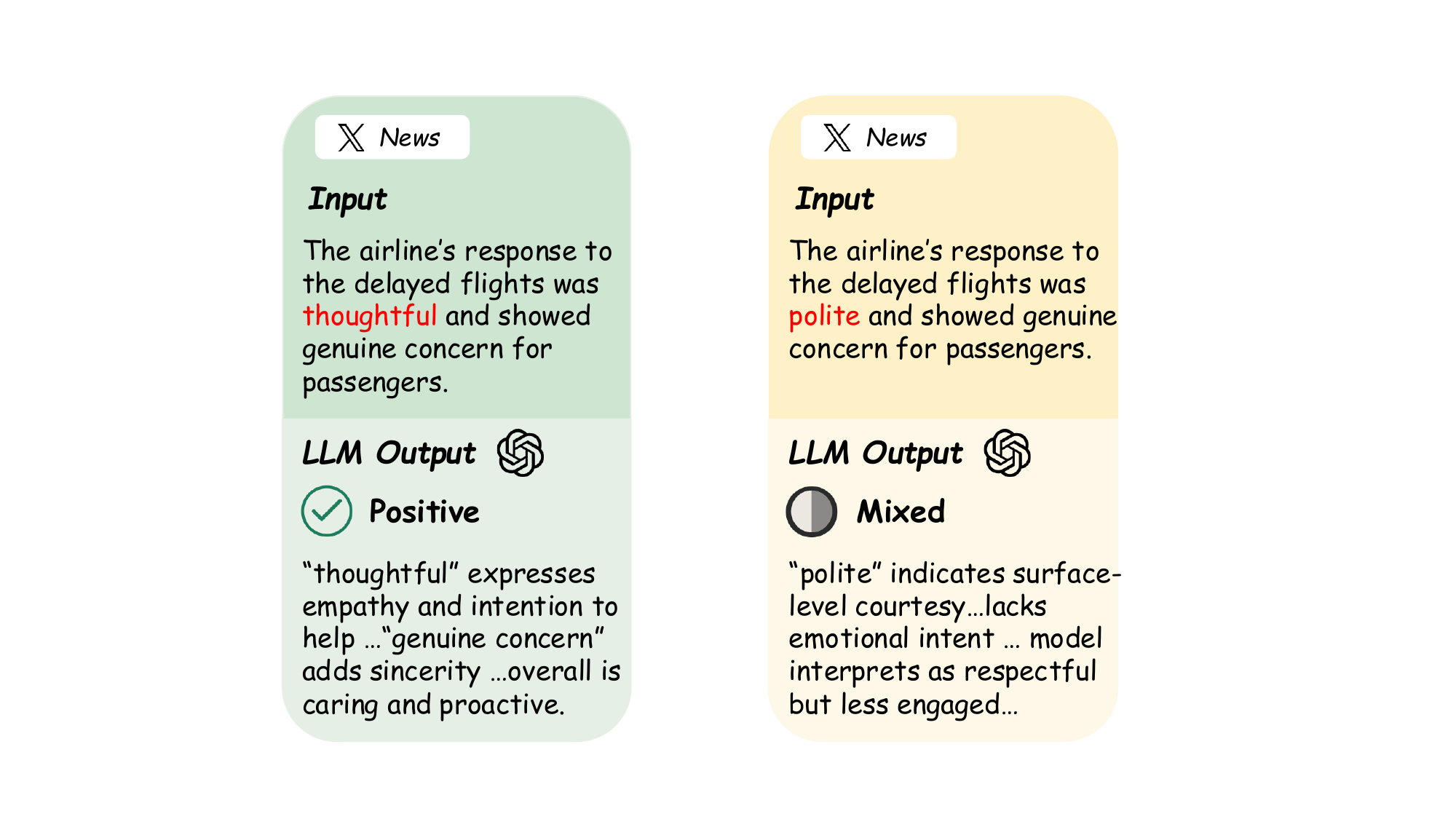}
    \caption{An example showing how a minimal synonym substitution can flip an LLM's sentiment prediction.
}
    \label{fig:start}
\end{figure}
Large Language Models (LLMs) have seen rapid development in recent years. Their strong performance has enabled a wide range of real-world applications~\cite{zhang2025pushing,lin2025explore,chen2024chatunitest}, bringing significant benefits in terms of efficiency and automation.
However, LLMs are vulnerable to adversarial inputs~\cite{shayegani2023survey,yi2024position,huang2024survey,sun-etal-2024-crowd,wang2025safety}, which can mislead predictions \cite{kumar2023certifying, mu2024enhancing}. In particular, in textual tasks, even minor semantic-preserving perturbations such as synonym substitutions~\cite{wang-etal-2021-certified, li2024drattack} or paraphrasing~\cite{fang2025languagemodelsecretlywrite} can easily mislead LLMs into producing incorrect or harmful outputs. This vulnerability becomes particularly pronounced when dealing with texts that contain subtle semantic distinctions~\cite{tao2024robustnesslargelanguagemodels}. For example (as shown in Figure~\ref{fig:start}), the sentence \textit{``The airline's response to the delayed flights was thoughtful...''} is likely to be interpreted as positive by an LLM. However, replacing \textit{``thoughtful''} with a near-synonym such as \textit{``polite''} can lead the model to a semantically distinct and potentially incorrect judgment.

To counter this threat, researchers have proposed various empirical defenses, such as adversarial data augmentation~\cite{cheng2020advaug,11027475} and adversarial fine-tuning~\cite{chen2020adversarial,zhang2024towards}, which have shown some effectiveness in mitigating specific attacks. However, these approaches often lead to a reactive ``arms race''~\cite{jin2020bert}, where evolving attack strategies continually require updated defense mechanisms. As a result, models must be frequently retrained or adapted, leading to an ongoing cycle of patching and evasion. To break this cycle, certified robustness~\cite{raghunathan2018certified,weng2018towards,jin2025reconcile,sun-ruan-2023-textverifier,zhang2025scalable,zhang2024prass,rocamora2025certified} has emerged as a promising alternative. It offers mathematically provable guarantees that model predictions remain consistent within a defined perturbation range. 

Nevertheless, the large number of parameters, high computational cost, and limited access to internal components make many traditional certification methods hard to apply to LLMs. In contrast, randomized smoothing~\cite{cohen2019certifiedadversarialrobustnessrandomized} provides a model-independent solution that uses probabilistic techniques to give robustness guarantees with high confidence. It does not require access to model parameters, making it suitable for black-box LLMs. This concept has been successfully established in the vision domain through randomized smoothing~\cite{cohen2019certifiedadversarialrobustnessrandomized, levine2020robustness, jia2022tightl0normcertifiedrobustness}. Inspired by this, some studies~\cite{jia2019certified,ye2020safer,chao2025jailbreaking} have applied randomized smoothing to natural language processing. 

While randomized smoothing is widely explored in the vision domain, its application in text is limited. 
In the previous work, ~\citet{zeng2021certifiedrobustnesstextadversarial} adopt randomized ablation by replacing words with \texttt{[MASK]} tokens. This strategy disrupts semantic and grammatical structure, resulting in certificates that are ineffective against semantics-preserving attacks. Building on this, ~\citet{ji2024advancingrobustnesslargelanguage} use LLMs to fill in masked tokens in an attempt to preserve meaning. However, their approach relies on unverifiable LLM outputs, which may introduce contextually inappropriate substitutions. Moreover, both methods suffer from low efficiency, as large-scale certification and repeated use of LLMs are computationally expensive.

To address these challenges, we propose CluCERT, an efficient certified robustness framework for LLMs, built upon a clustering-guided denoising smoothing strategy (as shown in Figure~\ref{fig:example}). CluCERT aims to enhance certified robustness bounds while maintaining computational efficiency. 
To this end, we highlight two key components that contribute to \textbf{efficiency}. First, we introduce the semantic refinement, where irrelevant tokens are removed based on their contribution to the input’s core meaning. This results in a fixed-length, refined input that avoids repeated sampling on semantically unimportant words. Importantly, the certified radius is computed over this filtered input, ensuring that the robustness guarantee aligns with the core semantic content. Second, instead of relying on LLM-generated token replacements, we introduce a lightweight synonym substitution strategy based on WordNet and embedding similarity, which avoids model queries while preserving semantic consistency.  To further \textbf{denoise}, we apply semantic clustering over the perturbed samples,  filtering out samples that are semantically inconsistent with the original input. This process increases the probability of the most frequent response while reducing the impact of less frequent ones.

This paper's main \textbf{contributions} are summarized as follows:
    (i) We develop a fast synonym substitution strategy combined with a semantic refinement module that removes less informative tokens. This design significantly improves computational efficiency and reduces the overall cost of certification.
(ii) We propose a novel framework, CluCERT, for certifying the robustness of LLMs. It incorporates a clustering-guided denoising module that preserves only meaning-consistent perturbations, resulting in tighter certified robustness bounds. We provide both theoretical analysis and empirical results to validate the effectiveness of our approach.
    (iii) We conduct extensive experiments across multiple tasks to demonstrate the effectiveness and efficiency of our approach. To the best of our knowledge, we are the first to apply certified robustness techniques to math word problem solving, a domain that requires precise semantics and sensitive to input perturbations. 

\begin{figure*}[ht] 
    \centering 
\includegraphics[width=0.9\textwidth,trim=20 30 10 45, clip]
{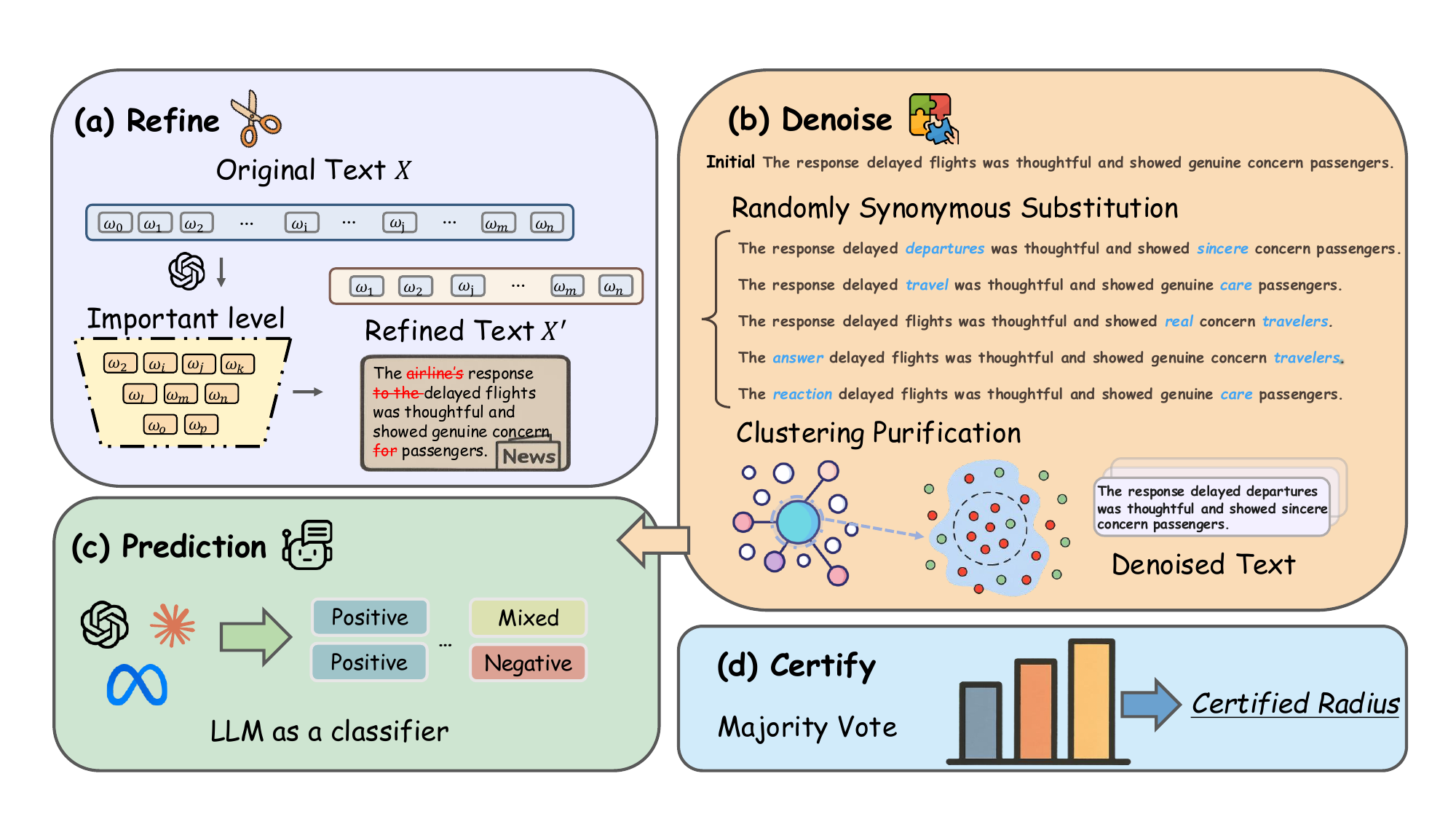}
    \caption{
Overview of our certified robustness framework \textbf{CluCERT}. (a) \textbf{Refine} removes irrelevant tokens using LLM generation to improve efficiency. (b) \textbf{Denoise} generates adversarial variants via synonym substitution and applies semantic clustering for purification. (c) \textbf{Predict} uses different LLMs for classification and aggregates outputs via majority vote. (d) \textbf{Certify} computes the certified radius based on the voting outcome. }
    \label{fig:example} 
\end{figure*}
\section{Related Works}
\textit{Randomized smoothing}\cite{cohen2019certifiedadversarialrobustnessrandomized} has become a mainstream approach in certified robustness research due to its ability to provide probabilistic guarantees without requiring access to model internals. The core idea is to ensure that a model's prediction remains unchanged under perturbations within a predefined set (e.g., modifying up to $n$ pixels), thereby establishing provable robustness bounds. To enhance the certified robustness bounds of randomized smoothing,  \citet{salman2020denoised} first introduced the concept of \textit{``denoising''smoothing} in the context of computer vision. \citet{carlini2022certified} further proposed the integration of diffusion models to remove the added Gaussian noise during the smoothing process, thereby strengthening robustness guarantees for standard models. However, prior work has mainly focused on the image domain. In this work, we extend the denoising concept to text by introducing a semantic consistency mechanism that selects perturbations aligned with the original meaning. In the vision domain, \citet{levine2020robustness} proposed a random ablation method for defending against sparse adversarial attacks. The approach involves randomly removing pixels from an image before classification to assess their influence on the output. \citet{jia2022tightl0normcertifiedrobustness} extended this idea to the top-$k$ prediction setting. 
Inspired by these methods in image-based tasks, ~\citet{zeng2021certifiedrobustnesstextadversarial} transferred the pixel deletion strategy to text classification by randomly replacing words with \texttt{[MASK]} tokens. However, their approach was developed for small-scale models and lacks scalability to LLMs. ~\citet{ji2024advancingrobustnesslargelanguage} introduced LLMs to fill in masked tokens for semantic denoising. Nonetheless, their method suffers from two major issues: first, the substitutions are not verified, which may introduce semantic drift; second, invoking LLMs for large-scale denoising leads to considerable computational cost.

Our work aims to strike a balance between the loose robustness bounds typically associated with standard models and the high computational overhead of the denoising process required for smoothed models. By incorporating a semantic clustering filter and an efficient perturbation generation strategy, we improve robustness certification while reducing the overall computational burden.

\section{Background}
\subsection{Notation}

We consider the standard setting of text classification, where the input is a sentence represented as a sequence of discrete tokens $w = (w_1, w_2, \ldots, w_n) \in \mathcal{W}$. Each token $w_i$ is drawn from a finite vocabulary $\mathcal{S}$, and $n$ denotes the sentence length. The goal is to assign an input sentence $w$ to one of the target classes in a finite label set $\mathcal{Y}$. A \textit{soft classifier} is defined as a mapping $F: \mathcal{W} \rightarrow \Delta^{|\mathcal{Y}|}$, which outputs a probability distribution over labels, where $\Delta^{|\mathcal{Y}|}$ is the $|\mathcal{Y}|$-dimensional probability simplex. The corresponding \textit{hard classifier} is given by $f(w) := \arg\max_{c \in \mathcal{Y}} F(w)_c$, returning the most likely label.

\subsection{Robustness Certification}

Certified robustness provides formal guarantees on a model's prediction stability under bounded perturbations, independent of specific attack strategies. Formally, let $f: \mathcal{W} \rightarrow \mathcal{Y}$ be a text classifier, and let $w \in \mathcal{W}$ be a clean input sentence. A certification algorithm computes a certified radius $d \in \mathbb{N}$ such that for any perturbed sentence $w' \in \mathcal{W}$ satisfying $\|w - w'\|_0 \leq d$, the prediction remains unchanged, i.e., $f(w') = f(w)$. Here, $\|w - w'\|_0 = \sum_{i=1}^n \mathbb{I}[w_i \neq w_i']$ denotes the Hamming distance, corresponding to the number of modified tokens.

This form of certification guarantees that the classifier remains robust against all adversarial inputs within an $\ell_0$ ball of radius $r$, regardless of how the perturbations are constructed. The certified radius serves as a conservative estimate of the model’s robustness; empirical studies~\cite{287372} have shown that robustness under the same perturbation budget is typically no worse and often higher.

\subsection{Randomized Smoothing on Text}


Let \( f: \mathcal{W} \rightarrow \mathcal{Y} \) be a base classifier that maps an input sentence \( w \in \mathcal{W} \) to a class label in \( \mathcal{Y} \). To improve its robustness, we construct a \textit{smoothed classifier} \( g \) by averaging the predictions of \( f \) over randomly perturbed input.

Each perturbation is generated by randomly masking some words in \( w \) and replacing them with semantically similar alternatives. Let \( T(w) \) denote a randomly perturbed version of \( w \). The smoothed classifier is defined as:$g(w) := \arg\max_{c \in \mathcal{Y}} \mathbb{P}[f(T(w)) = c]$,
where \( \mathbb{P}[f(T(w)) = c] \) represents the probability that the perturbed input is classified as label \( c \). For any class $c\in\mathcal{Y}$, we define its smoothed probability as:
 \( p_c(w) := \mathbb{P}_{\tilde{w} \sim \mathcal{D}(w)}[f(\tilde{w}) = c] \), where \( \mathcal{D}(w) \) reflects the randomness in the perturbation process.

By ensembling predictions over a distribution of semantically plausible inputs, randomized smoothing effectively transforms the base model into a robust classifier that can tolerate sparse word-level perturbations, making it suitable for formal certification.

\section{Methodology}

In this section, we present our theoretical contributions for certified robustness in text classification. Specifically, we begin by formally defining the two key operations in our framework, which correspond to the core processes of noise injection and semantic denoising.

Given a perturbation ratio parameter $m \in (0,1)$, we randomly select $s = \lfloor (1 - m) \cdot n \rfloor$ positions from a sentence of length $n$ as the retention set $\mathcal{T} \subseteq [n]$, and replace the remaining positions with the special symbol \texttt{[MASK]}. The mask operation is defined as $\mathcal{M}(w, \mathcal{T}) = (\tilde{w}_1, \ldots, \tilde{w}_n)$, where $\tilde{w}_i = w_i$ if $i \in \mathcal{T}$ and $\tilde{w}_i = \texttt{[MASK]}$ if $i \notin \mathcal{T}$. For example, let the input sentence be $w = (\texttt{A}, \texttt{B}, \texttt{C}, \texttt{D}, \texttt{E})$, and let $m = 0.4$, then $s = \lfloor 0.6 \cdot 5 \rfloor = 3$. Randomly selecting the retention set $\mathcal{T} = \{1, 3, 5\}$, the masked sentence becomes $\mathcal{M}(w, \mathcal{T}) = (\texttt{A},\ \texttt{[MASK]},\ \texttt{C},\ \texttt{[MASK]},\ \texttt{E})$. 

Second, we replace \texttt{[MASK]} positions with semantically similar words, such as synonyms, context-predicted words, or embedding-similar words. Let the semantic recovery mapping be $\mathcal{R} : \mathcal{W}_{\texttt{mask}} \to \mathcal{W}$, then the final perturbed sample is $T(w, \mathcal{T}) := \mathcal{R}(\mathcal{M}(w, \mathcal{T}))$. Continuing from the previous case, suppose the \texttt{[MASK]} positions are replaced with semantically similar \texttt{B'} and \texttt{D'} respectively, then $T(w, \mathcal{T}) = (\texttt{A},\ \texttt{B'},\ \texttt{C},\ \texttt{D'},\ \texttt{E})$, where \texttt{B'} and \texttt{D'} are substitutes for \texttt{B} and \texttt{D}, and the overall semantics remain consistent.

Thus, let the base classifier be $f:\mathcal{W}\to\mathcal{Y}$. We construct a smoothed classifier by aggregating predictions over randomly perturbed versions of the input. Formally, the smoothed classifier is defined as:
\begin{align}
g(w):=\arg\max_{y\in\mathcal{Y}}\mathbb{P}_{\mathcal{T}\sim\mathcal{U}_{n,s}}[f(T(w,\mathcal{T}))=y]
\end{align}
where $\mathcal{U}_{n,s}$ denotes the uniform distribution over all $\binom{n}{s}$ possible selections of $s$ positions from $n$ positions.

For any $c\in\mathcal{Y}$, the smoothed probability is defined as:
\begin{align}
p_c(w):=\mathbb{P}_{\mathcal{T}\sim\mathcal{U}_{n,s}}[f(T(w,\mathcal{T}))=c]
\end{align}

\subsection{Building a Smoothed LLM}

To build a smoothed classifier for textual tasks, and inspired by the insights from~\cite{jia2022tightl0normcertifiedrobustness} and~\cite{zeng2021certifiedrobustnesstextadversarial}, we propose a novel certification bound that differs from existing methods by incorporating both the sampling shift term~$\Delta_t$ and a semantic recovery stability factor~$\gamma$, enabling robustness certification under textual perturbations.

\newtheorem{theorem}{Theorem}
\begin{theorem}[\cite{levine2020robustness}]
For any $w'\in\mathcal{W}$ satisfying $\|w - w'\|_0 \leq d$, the smoothed probability for any class $c \in \mathcal{Y}$ satisfies
\begin{align}
|p_c(w) - p_c(w')| \leq \gamma \cdot \Delta_t,
\end{align}
where
\begin{align}
\Delta_t := 1 - \frac{\binom{n - d}{s}}{\binom{n}{s}},
\end{align}
and
\begin{align}
\gamma \in [0,1].
\end{align}
\end{theorem}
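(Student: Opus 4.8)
The plan is to run the standard randomized-ablation coupling argument: I will show that whenever the random retention set $\mathcal{T}$ misses every coordinate on which $w$ and $w'$ disagree, the two perturbed samples have the same law, so the entire gap $|p_c(w) - p_c(w')|$ is absorbed by the small-probability event that $\mathcal{T}$ touches a modified coordinate.

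First I would set $P := \{i \in [n] : w_i \ne w_i'\}$, the set of modified positions, so $|P| \le d$ by hypothesis, and let $E := \{\mathcal{T} \cap P = \emptyset\}$ be the ``clean-sample'' event. Counting $s$-subsets of $[n]$ gives $\mathbb{P}_{\mathcal{T}\sim\mathcal{U}_{n,s}}[E] = \binom{n-|P|}{s}/\binom{n}{s}$, which is non-increasing in $|P|$; hence $\mathbb{P}[E] \ge \binom{n-d}{s}/\binom{n}{s}$ and therefore $\mathbb{P}[E^c] \le \Delta_t$. (When $d > n-s$ one adopts $\binom{n-d}{s}=0$, so $\Delta_t = 1$ and the bound is vacuous since $p_c \in [0,1]$.)

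Next I would argue that on $E$ the mask operator coincides on the two inputs: for $i \in \mathcal{T}$ we have $i \notin P$, so $w_i = w_i'$, and for $i \notin \mathcal{T}$ both coordinates become \texttt{[MASK]}; thus $\mathcal{M}(w,\mathcal{T}) = \mathcal{M}(w',\mathcal{T})$. Applying the (possibly randomized) recovery map $\mathcal{R}$ to the same masked string yields samples with the same conditional law, so $\mathbb{P}[f(T(w,\mathcal{T}))=c,\,E] = \mathbb{P}[f(T(w',\mathcal{T}))=c,\,E]$. Splitting $p_c$ over $E$ and $E^c$, the $E$-terms cancel, leaving
\[
|p_c(w) - p_c(w')| = \bigl| \mathbb{P}[f(T(w,\mathcal{T}))=c,\,E^c] - \mathbb{P}[f(T(w',\mathcal{T}))=c,\,E^c] \bigr| \le \mathbb{P}[E^c] \le \Delta_t .
\]
I would then define $\gamma := |p_c(w)-p_c(w')|/\Delta_t$ when $\Delta_t > 0$ (and $\gamma := 0$ otherwise); the display shows $\gamma \in [0,1]$, and $\gamma$ is precisely the factor measuring how much the semantic-recovery step still keeps the classifier's output stable even when $\mathcal{T}$ does hit a modified coordinate, which yields the refined bound $|p_c(w) - p_c(w')| \le \gamma \cdot \Delta_t$.

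The main obstacle is the bookkeeping around $\mathcal{R}$: I need its internal randomness to be independent of $\mathcal{T}$ (equivalently, its conditional law to depend only on the masked input), since otherwise the crucial implication ``$\mathcal{M}(w,\mathcal{T}) = \mathcal{M}(w',\mathcal{T}) \Rightarrow$ equal law of $f(T(\cdot,\mathcal{T}))$ on $E$'' can fail. Once that assumption is made explicit, everything else reduces to a routine hypergeometric count and the elementary decomposition above.
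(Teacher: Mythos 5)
Your combinatorial core coincides with the paper's proof: the same set of differing positions, the same clean event $E=\{\mathcal{T}\cap P=\emptyset\}$, the same hypergeometric count giving $\mathbb{P}[E^c]\le 1-\binom{n-d}{s}/\binom{n}{s}=\Delta_t$, and the same observation that the masked strings coincide on $E$. Your cancellation step is sound (and arguably cleaner than the paper's): since $p_c$ is a marginal probability and the conditional law of $\mathcal{R}$ depends only on the masked string, the $E$-contributions to $p_c(w)$ and $p_c(w')$ are equal, giving $|p_c(w)-p_c(w')|\le\mathbb{P}[E^c]\le\Delta_t$. The paper instead compares a paired run $f(T(w,\mathcal{T}))$ versus $f(T(w',\mathcal{T}))$ and retains a residual term $\gamma_{\mathcal{R}}\cdot\mathbb{P}[E]$, where $\gamma_{\mathcal{R}}:=\sup_{\mathcal{T}\cap I=\emptyset}\mathbb{P}_{\mathcal{R}}[T(w,\mathcal{T})\neq T(w',\mathcal{T})]$ measures the instability of the randomized recovery even on identical masked inputs; your explicit independence assumption on $\mathcal{R}$ is exactly the point the paper handles through $\gamma_{\mathcal{R}}$.

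The genuine gap is your treatment of $\gamma$. Defining $\gamma:=|p_c(w)-p_c(w')|/\Delta_t$ after the fact makes the inequality $|p_c(w)-p_c(w')|\le\gamma\cdot\Delta_t$ a tautology: what you have actually established is the classical ablation bound with $\gamma=1$, and your $\gamma$ depends on the particular pair $(w,w')$ and on the class $c$, so it is unknown at certification time and cannot serve the role the theorem plays downstream. Corollary~1, Theorem~2 and Algorithm~1 all require a single recovery-stability factor $\gamma$, fixed in advance, that holds uniformly over every $w'$ in the $\ell_0$ ball (and Appendix~A estimates it empirically via $\gamma\approx p_c(x)$); a ratio read off from the very quantity being bounded cannot be plugged into the condition $\underline{p}_c-\overline{p}_j>2\gamma\Delta_t$. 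The paper's proof instead introduces $\gamma$ through the input-independent supremum $\gamma_{\mathcal{R}}$, i.e., as a modelling quantity defined before the bound rather than extracted from it. (Admittedly the paper's own final step, setting $\gamma=\max(1,\gamma_{\mathcal{R}})$, is itself questionable, since its intermediate bound $\gamma_{\mathcal{R}}\,\mathbb{P}[E]+\mathbb{P}[E^c]$ does not reduce to a factor in $[0,1]$ times $\Delta_t$; but your route sidesteps, rather than resolves, what a nontrivial $\gamma<1$ means, which is the substantive content the theorem is meant to carry.)
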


Here, \( d\) denotes the maximum number of perturbed words, i.e., the \( \ell_0 \)-distance between \( w \) and \( w' \).
 $\Delta_t = 1 - \binom{n - d}{s} / \binom{n}{s}$ quantifies the sampling distribution shift induced by $\ell_0$ perturbations, while $\gamma \in [0,1]$ captures the semantic recovery stability. A complete proof is provided in \textbf{Appendix A}.

In practice, we approximate the exact smoothed probability \( p_c(w) \) by averaging the model’s predictions over a finite number of randomly sampled mask patterns, since enumerating all possible masks is computationally infeasible. Specifically, we sample $N$ mask sets $\mathcal{T}_1, \ldots, \mathcal{T}_N \sim \mathcal{U}_{n,s}$ and estimate:
$\hat{p}_c(w) = \frac{1}{N} \sum_{i=1}^{N} \mathbb{I}[f(T(w, \mathcal{T}_i)) = c]$.

To provide rigorous guarantees despite finite sampling, we construct confidence intervals using the Clopper-Pearson method~\cite{clopper1934use}. Let $\underline{p}_c(w)$ and $\overline{p}_c(w)$ denote the lower and upper bounds of the $(1 - \alpha)$ confidence interval for $p_c(w)$. Our certification procedure (as shown in Algorithm~\ref{alg:semantic_smoothing}) leverages these conservative estimates to ensure probabilistic soundness.

\newtheorem{corollary}{Corollary}

\begin{corollary}

\label{cor:practical}
Let $c=\arg\max_y \underline{p}_y(w)$ be the class with highest lower confidence bound.
If for all $j\neq c$:
\begin{align}
\underline{p}_c(w) - \overline{p}_j(w) > 2\gamma\cdot\Delta_t
\end{align}
then for all $w'\in\mathcal{B}_t(w):=\{w':\|w-w'\|_0\leq t\}$, we have $g(w')=c$ with probability at least $1-\alpha$.
\end{corollary}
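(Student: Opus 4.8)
The plan is to derive Corollary~\ref{cor:practical} from Theorem~1 by a standard ``doubled margin'' argument, combined with the monotonicity of $\Delta_t$ in the perturbation size. First I would fix an arbitrary $w' \in \mathcal{B}_t(w)$, so $\|w - w'\|_0 \leq t$. Applying Theorem~1 with $d = t$ gives, for every class $y \in \mathcal{Y}$, the two-sided bound $|p_y(w) - p_y(w')| \leq \gamma \cdot \Delta_t$; in particular $p_c(w') \geq p_c(w) - \gamma\Delta_t$ and $p_j(w') \leq p_j(w) + \gamma\Delta_t$ for each $j \neq c$. (If one wants the guarantee uniformly over the whole ball $\mathcal{B}_t(w)$ rather than only at radius exactly $t$, note $\Delta_d$ is nondecreasing in $d$, so $\Delta_d \leq \Delta_t$ for all $d \leq t$ and the same inequalities hold.)

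Next I would pass from the true probabilities to the Clopper--Pearson confidence bounds. By construction $\underline{p}_c(w) \leq p_c(w)$ and $\overline{p}_j(w) \geq p_j(w)$ simultaneously for all classes with probability at least $1-\alpha$ over the sampling of $\mathcal{T}_1,\dots,\mathcal{T}_N$. Condition on this event. Chaining the inequalities: for each $j \neq c$,
\begin{align}
p_c(w') - p_j(w') \;\geq\; \bigl(p_c(w) - \gamma\Delta_t\bigr) - \bigl(p_j(w) + \gamma\Delta_t\bigr) \;\geq\; \underline{p}_c(w) - \overline{p}_j(w) - 2\gamma\Delta_t \;>\; 0,
\end{align}
where the last strict inequality is exactly the hypothesis $\underline{p}_c(w) - \overline{p}_j(w) > 2\gamma\Delta_t$. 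Hence $p_c(w') > p_j(w')$ for every $j \neq c$, which means $c$ is the unique argmax of $p_\cdot(w')$, i.e. $g(w') = c$. Since $w'$ was arbitrary in $\mathcal{B}_t(w)$ and the conditioning event has probability at least $1-\alpha$, the claim follows.

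The argument is essentially bookkeeping, so I do not expect a serious obstacle; the one point requiring care is the quantifier order for the confidence bounds. The event we condition on must be that \emph{all} $|\mathcal{Y}|$ Clopper--Pearson bounds are simultaneously valid, so strictly speaking one should either invoke a union bound (splitting $\alpha$ across classes) or, as is standard in randomized-smoothing proofs, observe that it suffices to control only the top class $c$ and the single runner-up $j^\star = \arg\max_{j \neq c}\overline{p}_j(w)$, reducing to a two-sided or even one-sided confidence statement that costs only $\alpha$ total. I would state the corollary's ``probability at least $1-\alpha$'' with that convention in mind and make the reduction explicit in one sentence. A secondary subtlety worth a remark: the hypothesis is stated with $\Delta_t$ (the shift at the certified radius $t$), and because $d \mapsto \Delta_d$ is increasing, verifying it at $d=t$ automatically covers every smaller perturbation, which is why a single check certifies the whole ball.
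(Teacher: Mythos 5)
Your proposal is correct and follows essentially the same doubled-margin argument as the paper's proof: apply Theorem~1 at radius $t$ to get $p_c(w') \geq p_c(w) - \gamma\Delta_t$ and $p_j(w') \leq p_j(w) + \gamma\Delta_t$, then chain with the hypothesized gap to conclude $g(w')=c$. In fact your write-up is slightly more careful than the paper's, which silently replaces $\underline{p}_c(w)$ and $\overline{p}_j(w)$ by the true probabilities; your explicit conditioning on the event that the Clopper--Pearson bounds are valid (and the remark on restricting to the top class and runner-up to account for the $1-\alpha$ budget) is exactly the step needed to justify the ``with probability at least $1-\alpha$'' clause.
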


\begin{proof}
For any $w' \in \mathcal{B}_t(w)$ and $j \neq c$, the probability drift bound gives
\( p_c(w') \geq p_c(w) - \gamma\Delta_t \) and \( p_j(w') \leq p_j(w) + \gamma\Delta_t \). Thus,
\begin{align*}
p_c(w') - p_j(w') &\geq (p_c(w) - \gamma\Delta_t) - (p_j(w) + \gamma\Delta_t) \\
&= p_c(w) - p_j(w) - 2\gamma\Delta_t > 0
\end{align*}
which implies \( c = \arg\max_y p_y(w') \), i.e., \( g(w') = c \).
\end{proof}

This bound extends the classical randomized smoothing certification by explicitly incorporating both the sampling shift and the uncertainty in semantic recovery stability. In binary classification tasks, the condition simplifies to \( p_c(w) - \gamma \Delta_t > 0.5 \), allowing robustness certification for text classification under a confidence level of $(1 - \alpha )$.

Furthermore, to enhance the efficiency of our smoothing framework and ensure consistent processing across all modules, we introduce a \textit{Refine} operation that selects the top-\(L\) most informative tokens from a sentence based on LLM-derived importance scores.
\newtheorem{definition}{Definition}
\begin{definition}[Refine Operation]
Given an input sentence \( w = (w_1, w_2, \ldots, w_n) \), where \( w_i \) denotes the \(i\)-th token, we define the refine function \( R: \mathcal{W} \rightarrow \mathcal{W}_L \) as
\[
R(w) := \{ w_i \in w : \text{rank}_\sigma(w_i) \leq L \}.
\]
Here, \( \sigma: \mathcal{W} \rightarrow \mathbb{R}^n \) is an importance scoring function computed by a language model that assigns each token \( w_i \) a real-valued score \( \sigma(w_i) \). The operator \( \text{rank}_\sigma(w_i) \) denotes the position of \( w_i \) when all tokens in \( w \) are sorted in descending order of importance. The parameter \( L \) is the target output length, ensuring \( |R(w)| = L \) for all inputs with \( |w| \geq L \). We denote \( \mathcal{W}_L := \{ w \in \mathcal{W} : |w| = L \} \) as the space of refined sequences of fixed length.
\end{definition}
This operation extracts the core semantic content of the input while ensuring a uniform output length across all texts. All downstream modules in our framework operate on these fixed-length refined inputs. In practice, we achieve this by injecting a carefully designed prompt into the LLM to rank input words by importance and prune less informative ones accordingly (see \textbf{Appendix B} for details).

\begin{algorithm}[t]
\caption{Clustering-Guided Denoising Smoothing}
\label{alg:semantic_smoothing}
\begin{tabular}{ll}
1: & \textbf{procedure} Classifier$(w, f, s, N)$ \\
2: & \quad Sample $\mathcal{B} = \{w_1', \ldots, w_N'\}$ where $w_i' = T(w, \mathcal{T}_i)$\\
3: & \quad $\tilde{\mathcal{B}} \gets \{w' \in \mathcal{B} : \phi(w') \in \mathcal{C}_{\max}\}$ \quad $\triangleright$ \textit{Clustering} \\
4: & \quad counts$[c] \gets |\{w' \in \tilde{\mathcal{B}} : f(w') = c\}|$ for all $c \in \mathcal{Y}$ \\
5: & \quad \textbf{return} counts \\
\\
6: & \textbf{procedure} Predict$(w, f, s, N)$ \\
7: & \quad counts $\gets$ Classifier$(w, f, s, N)$ \\
8: & \quad $\hat{c} \gets \arg\max_{c} \mathrm{counts}[c]$ \quad $\triangleright$ \textit{Majority vote} \\
9: & \quad $\tilde{p}_{\hat{c}} \gets$ counts$[\hat{c}] / \sum_c$ counts$[c]$ \\
10: & \quad \textbf{return} $\hat{c}, \tilde{p}_{\hat{c}}$ \\
\\
11: & \textbf{procedure} Certify$(w, y, f, s, N, N', \gamma, \alpha)$ \\
12: & \quad $\hat{c}, \tilde{p}_{\hat{c}} \gets$ Predict$(w, f, s, N)$ \\
13: & \quad Let $n_A, n_B$ be counts of the top-2 classes \\
14: & \quad \textbf{if} $\mathrm{BinomPValue}(n_A, n_A + n_B, 0.5) > \alpha$ \\
15: & \quad\quad \textbf{return} ABSTAIN \\
16: & \quad counts $\gets$ Classifier$(w, f, s, N')$\\
17: & \quad Compute bounds $\{\underline{p}_c, \overline{p}_c\}_{c \in \mathcal{Y}}$ from counts \\
18: & \quad $r^* \gets \max\{t : \underline{p}_y - \overline{p}_j > 2\gamma\Delta_t, \forall j \neq y\}$ \\
19: & \quad \textbf{return} $r^*$ \\
\end{tabular}
\end{algorithm}

\subsection{Cluster-guided Denoising}

In this section, we formally define the concept of clustering and provide a theoretical proof for how clustering-guided denoising smoothing improves certified robustness.

Let $\mathcal{W}_t(w)=\{w_1',\ldots,w_N'\}$ denote the perturbation set obtained through mask and semantic filling. 
We introduce an embedding function $\phi:\mathcal{W}\to\mathbb{R}^m$ that maps each perturbed sample to its semantic representation $z_i=\phi(w_i')$. 
By applying a clustering algorithm to partition $\{z_i\}_{i=1}^N$ into $K$ clusters $\mathcal{C}_1,\ldots,\mathcal{C}_K$, 
we identify the largest cluster index $k_{\max}:=\arg\max_{k}|\mathcal{C}_k|$ and denote its corresponding cluster as $\mathcal{C}_{k_{\max}}$. 
This induces a refined perturbation subspace $\tilde{\mathcal{W}}_t(w):=\{w_i'\mid\phi(w_i')\in\mathcal{C}_{k_{\max}}\}$, 
over which we re-estimate the smoothed probabilities as 
$\tilde{p}_c:=\frac{|\{w'\in\tilde{\mathcal{W}}_t(w):f(w')=c\}|}{|\tilde{\mathcal{W}}_t(w)|}$. 
We select the largest cluster as the primary semantic group, as semantically consistent perturbations tend to concentrate within it, 
whereas adversarial or noisy samples are typically dispersed among smaller clusters.

\begin{theorem}
\label{thm:clustering}
Let 
\begin{align}
    r^* = \max\left\{ t \in \mathbb{N} : p_c(w) - \max_{j \neq c} p_j(w) > 2\gamma\Delta_t \right\}
\end{align}

be the original certified radius. Suppose semantic clustering induces probability shifts such that there exists some \( \epsilon > 0 \) satisfying:
\begin{align*}
\tilde{p}_c(w) &\geq p_c(w) + \epsilon, \\
\tilde{p}_j(w) &\leq p_j(w) - \epsilon \quad \text{for all } j \neq c.
\end{align*}
Then the certified radius with clustering $\tilde{r}^*$ satisfies $\tilde{r}^* > r^*$
\end{theorem}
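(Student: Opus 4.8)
The plan is to show that the defining inequality for the certified radius becomes easier to satisfy after clustering, so the maximal $t$ satisfying it can only increase. First I would observe that $\Delta_t = 1 - \binom{n-t}{s}/\binom{n}{s}$ is a nondecreasing function of $t$ (since $\binom{n-t}{s}$ is nonincreasing in $t$), so $2\gamma\Delta_t$ is nondecreasing in $t$; meanwhile the left-hand sides $p_c(w) - \max_{j\neq c} p_j(w)$ and $\tilde p_c(w) - \max_{j\neq c}\tilde p_j(w)$ are constants independent of $t$. Hence each certified radius is characterised as the largest $t$ for which a fixed constant (the class margin) exceeds the nondecreasing threshold $2\gamma\Delta_t$.

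Next I would compare the two margins. By the hypothesised clustering shifts, $\tilde p_c(w) \ge p_c(w) + \epsilon$ and $\tilde p_j(w) \le p_j(w) - \epsilon$ for every $j \neq c$. Taking a maximum over $j$ on both sides of the second inequality gives $\max_{j\neq c}\tilde p_j(w) \le \max_{j\neq c} p_j(w) - \epsilon$. Subtracting, the clustered margin satisfies
\begin{align*}
\tilde p_c(w) - \max_{j\neq c}\tilde p_j(w) \;\ge\; \bigl(p_c(w)+\epsilon\bigr) - \bigl(\max_{j\neq c} p_j(w) - \epsilon\bigr) \;=\; \bigl(p_c(w) - \max_{j\neq c} p_j(w)\bigr) + 2\epsilon.
\end{align*}
So the clustered margin strictly dominates the original margin by $2\epsilon > 0$.

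Then I would conclude by monotonicity. Since $r^*$ is defined by $p_c(w) - \max_{j\neq c} p_j(w) > 2\gamma\Delta_{r^*}$, and the clustered margin is at least $2\epsilon$ larger, we get $\tilde p_c(w) - \max_{j\neq c}\tilde p_j(w) > 2\gamma\Delta_{r^*} + 2\epsilon > 2\gamma\Delta_{r^*}$, which shows $t = r^*$ is feasible for the clustered condition, hence $\tilde r^* \ge r^*$. To upgrade to the strict inequality $\tilde r^* > r^*$ I would argue that $t = r^*+1$ is also feasible: one needs $2\gamma\Delta_{r^*+1} < \tilde p_c(w) - \max_{j\neq c}\tilde p_j(w)$, and since the original margin fails at $r^*+1$, i.e.\ $2\gamma\Delta_{r^*+1} \ge p_c(w)-\max_{j\neq c}p_j(w)$, the $2\epsilon$ slack suffices provided the jump $2\gamma(\Delta_{r^*+1}-\Delta_{r^*})$ does not exceed $2\epsilon$.

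The main obstacle is precisely this last point: the strict conclusion $\tilde r^* > r^*$ does not follow from an arbitrarily small $\epsilon$, because $\Delta_t$ increases in discrete jumps and the margin only needs to clear the threshold by an infinitesimal amount. I would therefore either (i) strengthen the hypothesis to assume $\epsilon$ is large enough that $2\gamma\Delta_{r^*+1} - \bigl(p_c(w) - \max_{j\neq c} p_j(w)\bigr) < 2\epsilon$, which is the honest sufficient condition, or (ii) interpret the claim as $\tilde r^* \ge r^*$ with strictness holding generically / whenever the original margin is not exactly tight at $r^*+1$. In the write-up I would state the clean monotonicity lemma on $\Delta_t$ first, then the margin-domination inequality, and flag the discreteness caveat explicitly rather than gloss over it.
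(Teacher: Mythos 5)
Your proposal follows essentially the same route as the paper's Appendix~C proof: both compare the clustered margin $\tilde p_c(w)-\max_{j\neq c}\tilde p_j(w)\geq p_c(w)-\max_{j\neq c}p_j(w)+2\epsilon$ against the nondecreasing threshold $2\gamma\Delta_t$ and conclude by monotonicity. The difference is that the paper pushes further: it writes the original margin as $2\gamma\Delta_{r^*}+\delta$ with $0\leq\delta\leq 2\gamma(\Delta_{r^*+1}-\Delta_{r^*})$, then invokes an asymptotic linearization $\Delta_t-\Delta_{r^*}\approx s(t-r^*)/(n-s/2)$ (valid only for $t\ll\min(s,n-s)$) to arrive at $\tilde r^*=r^*+\lfloor (n-s/2)(\delta+2\epsilon)/(2\gamma s)\rfloor$ and asserts strictness because $\delta+2\epsilon>0$. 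The discreteness caveat you flag is genuine and is precisely where the paper's argument is loose: for small $\epsilon$ the floor term is zero, so the displayed formula gives $\tilde r^*=r^*$, and the final claim ``$\tilde r^*>r^*$'' does not follow; moreover the step rests on an approximation rather than the exact $\Delta_t$. Your version is the honest one: the hypotheses as stated yield only $\tilde r^*\geq r^*$, and strict improvement requires exactly the quantitative condition you identify, namely $2\epsilon>2\gamma(\Delta_{r^*+1}-\Delta_{r^*})-\delta$, i.e.\ $\tilde p_c(w)-\max_{j\neq c}\tilde p_j(w)>2\gamma\Delta_{r^*+1}$. So your write-up plan (state the monotonicity of $\Delta_t$, prove margin domination, then either strengthen the hypothesis on $\epsilon$ or weaken the conclusion to $\geq$) is the correct repair of the theorem, not a deficiency of your proof.
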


The complete proof is provided in \textbf{Appendix C}. We also provide the detailed clustering algorithm implementation in \textbf{Appendix D}. This theorem shows that denoising can provably improve the certification bound. By leveraging clustering to focus on semantically coherent perturbations, we reduce the influence of low-consistency perturbations and shift the prediction distribution towards the correct class, thus enabling tighter robustness guarantees.

\newtheorem{lemma}{Lemma}
\begin{lemma}
\label{lem:variance}
If the embedding mapping $\phi$ and classifier $f$ satisfy the Lipschitz condition:
\begin{align*}
\|\phi(w_i')-\phi(w_j')\| \leq \rho \Rightarrow \mathbb{P}[f(w_i')=f(w_j')] \geq 1-L\rho
\end{align*}
and the clustering diameter satisfies $\text{diam}(\phi(\tilde{\mathcal{W}}_t(w))) \leq r$, then:
\begin{align}
\text{Var}_{w'\in\tilde{\mathcal{W}}_t(w)}[\mathbb{I}[f(w')=c]] \leq Lr
\end{align}
\end{lemma}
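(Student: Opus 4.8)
The plan is to reduce the statement to a second-moment (label-collision) estimate. Write $p := \mathbb{P}[f(w')=c]$ for $w'$ drawn uniformly from $\tilde{\mathcal{W}}_t(w)$ (and over any internal randomness of $f$). Since $\mathbb{I}[f(w')=c]$ is an indicator, it equals its own square, so $\mathrm{Var}_{w'\in\tilde{\mathcal{W}}_t(w)}[\mathbb{I}[f(w')=c]] = \mathbb{E}[\mathbb{I}[f(w')=c]] - \mathbb{E}[\mathbb{I}[f(w')=c]]^2 = p(1-p)$. Hence it suffices to show $p(1-p) \le Lr/2$, which is in fact stronger than the claimed bound $Lr$.

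Next I would introduce two independent copies $W', W''$ drawn uniformly from $\tilde{\mathcal{W}}_t(w)$, each with its own independent copy of $f$'s randomness, and estimate the label-collision probability $P_{\mathrm{coll}} := \mathbb{P}[f(W')=f(W'')]$ in two ways. On one hand, because $\mathrm{diam}(\phi(\tilde{\mathcal{W}}_t(w))) \le r$, every realized pair satisfies $\|\phi(W')-\phi(W'')\| \le r$, so the assumed Lipschitz-type condition gives $\mathbb{P}[f(W')=f(W'') \mid W',W''] \ge 1-Lr$ pointwise; taking expectations over $(W',W'')$ yields $P_{\mathrm{coll}} \ge 1-Lr$. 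On the other hand, writing $q_y := \mathbb{P}[f(W')=y]$ for each $y\in\mathcal{Y}$ (so $q_c = p$ and $\sum_y q_y = 1$), the independence of the two copies makes $P_{\mathrm{coll}}$ factorize as $P_{\mathrm{coll}} = \sum_{y\in\mathcal{Y}} q_y^2$.

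Finally I would combine the two computations. Using $\sum_{y\ne c} q_y^2 \le \bigl(\sum_{y\ne c} q_y\bigr)^2 = (1-p)^2$, which is valid since all $q_y \ge 0$ so the cross terms are nonnegative, I get $1-Lr \le P_{\mathrm{coll}} = p^2 + \sum_{y\ne c} q_y^2 \le p^2 + (1-p)^2 = 1 - 2p(1-p)$. Rearranging gives $p(1-p) \le Lr/2 \le Lr$, which is exactly the variance bound, completing the proof.

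I expect the only genuine subtlety — the \emph{hard part} — to be making the collision argument precise when $f$ is allowed to be stochastic: one must keep the randomness of $f$ on the two samples independent so that $P_{\mathrm{coll}}$ factorizes as $\sum_y q_y^2$, while simultaneously invoking the Lipschitz hypothesis conditionally on the sampled pair $(W',W'')$ (where the diameter bound supplies the needed $\|\phi(W')-\phi(W'')\|\le r$). Everything else is routine: the idempotence of indicators, the inequality $\sum_{y\ne c} q_y^2 \le (1-p)^2$, and the identity $p^2+(1-p)^2 = 1-2p(1-p)$.
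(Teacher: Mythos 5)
Your proof is correct, but it takes a genuinely different route from the paper's. The paper anchors at an arbitrary reference sample $w_0 \in \tilde{\mathcal{W}}_t(w)$, applies the Lipschitz hypothesis pairwise between $w_0$ and every other sample (the diameter bound supplying $\rho = r$), and concludes that $\mu_c = \mathbb{P}_{w'\in\tilde{\mathcal{W}}_t(w)}[f(w')=c]$ is either at least $1-Lr$ or at most $Lr$; in either case the Bernoulli variance $\mu_c(1-\mu_c)$ is at most $Lr$. You instead run a collision (second-moment) argument: two independent copies $W',W''$ give $P_{\mathrm{coll}} \ge 1-Lr$ from the diameter plus Lipschitz condition, while independence factorizes $P_{\mathrm{coll}} = \sum_y q_y^2 \le p^2+(1-p)^2$, yielding $p(1-p)\le Lr/2$. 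What each buys: the paper's anchoring argument is shorter and more elementary, but its case split ``if $f(w_0)=c$'' implicitly treats the label of the reference point as deterministic, which is slightly awkward if $f$ (or the recovery step inside it) is stochastic; your argument avoids conditioning on any reference label, handles a stochastic classifier cleanly provided the Lipschitz probability is read as being over independent evaluations of $f$ on the two inputs (the subtlety you correctly flag, and which the paper's informal hypothesis leaves open), and it sharpens the constant to $Lr/2$, strictly stronger than the stated bound.
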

\begin{proof}
Let $\mu_c = \mathbb{P}_{w'\in\tilde{\mathcal{W}}_t(w)}[f(w') = c]$ denote the class probability within the refined cluster. Since the semantic diameter of the cluster is at most $r$, for any two samples $w_i', w_j' \in \tilde{\mathcal{W}}_t(w)$, we have $\|\phi(w_i') - \phi(w_j')\| \leq r$, and thus by the Lipschitz condition,
\[
\mathbb{P}[f(w_i') = f(w_j')] \geq 1 - Lr.
\]

Now fix an arbitrary reference point $w_0 \in \tilde{\mathcal{W}}_t(w)$. If $f(w_0) = c$, then for all $w' \in \tilde{\mathcal{W}}_t(w)$ we have $\mathbb{P}[f(w') = c] \geq 1 - Lr$, implying $\mu_c \geq 1 - Lr$. Conversely, if $f(w_0) \neq c$, then $\mathbb{P}[f(w') \neq c] \geq 1 - Lr$, so $\mu_c \leq Lr$.

In either case, the variance of the binary variable $\mathbb{I}[f(w') = c]$ satisfies
\[
\text{Var}[\mathbb{I}[f(w') = c]] = \mu_c(1 - \mu_c) \leq Lr.
\]
\end{proof}

Building on Lemma~\ref{lem:variance}, we observe that semantic clustering not only increases the mean prediction probability for the majority class but also reduces its variance. When perturbed samples are tightly grouped in the embedding space, the Lipschitz continuity of the classifier implies more stable predictions. This stability leads to lower variance and, consequently, enables tighter and more reliable certified robustness bounds.

In conclusion, we interpret cluster-guided denoising as projecting the perturbation set $\mathcal{W}_t(w)$ onto a semantically consistent subspace $\tilde{\mathcal{W}}_t(w) = \Pi_{\mathcal{S}_{\text{sem}}}(\mathcal{W}_t(w))$. The target subspace $\mathcal{S}_{\text{sem}}$ is characterized by: (i) tight clustering with $\max_{w',w''\in\mathcal{S}_{\text{sem}}}|\phi(w')-\phi(w'')|\leq r$, and (ii) high prediction consistency where $\mathbb{P}[f(w')=f(w'')] \geq 1-Lr$ for any $w',w'' \in \mathcal{S}_{\text{sem}}$. This projection acts as semantic denoising, improving model stability and enlarging certified robustness regions without modifying the base model.

\section{Experiments}

\subsection{Experimental Setup} 
\subsubsection{Overview} 
We consider two major task settings to evaluate our proposed framework CluCERT:
\textbf{\textit{(a)}} certified robustness of a given LLM under word substitutions,  
and \textbf{\textit{(b)}} empirical robustness under attacks.

\subsubsection{Datasets and models}

\begin{figure*}[ht]
    \centering
    \includegraphics[width=0.9\textwidth ]{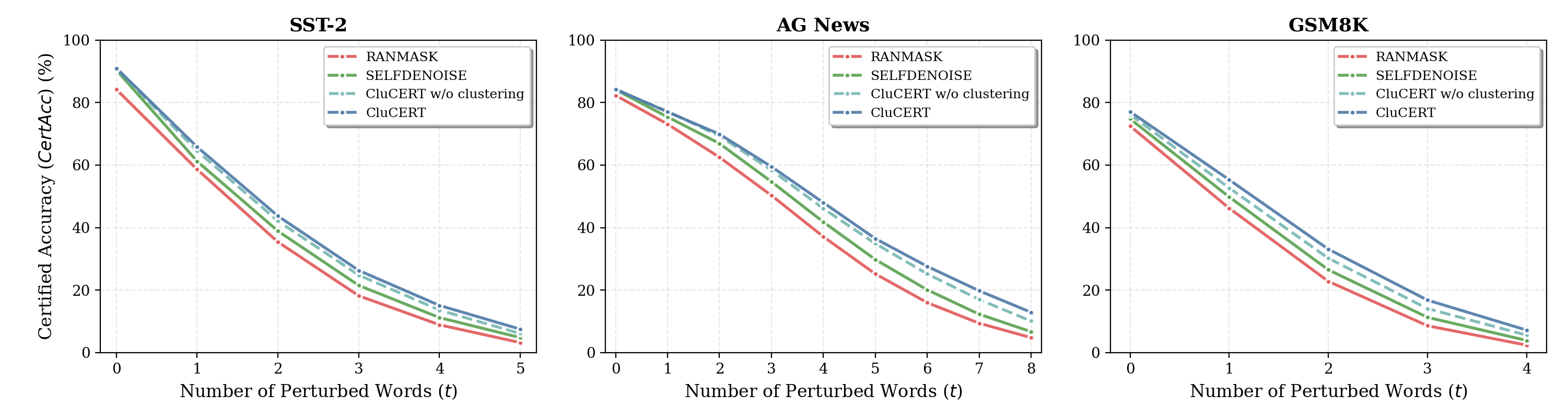}
    \caption{
    Certified accuracy on SST-2, AG News, and GSM8K under different numbers of perturbed words.
}
    \label{fig:exe}
\end{figure*}
To demonstrate both the certified radius enhancement our framework (settings \textbf{\textit{a}}), we follow prior work \cite{ji2024advancingrobustnesslargelanguage} and conduct experiments on two widely used sentiment classification datasets: SST-2 \cite{socher2013recursive} and AGNews \cite{zhang2015character}. We additionally include the GSM8K dataset \cite{cobbe2021gsm8k} to assess robustness in mathematical problem-solving. For these evaluations, we use ChatGPT-3.5 \footnote{\url{https://platform.openai.com}} as base models. To assess empirical robustness under adversarial attacks (setting \textbf{\textit{b}}), we adopt Textbugger \cite{Li_2019} and DeepWordBug \cite{gao2018blackboxgenerationadversarialtext}, using the open-source LLM Vicuna \footnote{\url{https://huggingface.co/lmsys/vicuna-7b-v1.5}} for evaluation.

\subsubsection{Implementation details and metrics} 
Following prior work \cite{zeng2021certifiedrobustnesstextadversarial}, we randomly selected 2,000 examples from each dataset for both certified and empirical robustness experiments. For each input instance, we generated 1,000 perturbed samples using word substitutions. All experiments are conducted under a consistent prompt interface to ensure a fair comparison across defense strategies.

To enable efficient local synonym substitution, we construct a hybrid candidate pool centered on WordNet \footnote{https://wordnet.princeton.edu/}, augmented with embedding-based nearest neighbors and domain-specific lexicons. After generating perturbed sentences, we compute sentence-level semantic similarity between each perturbed sentence and the original input using a pre-trained BERT model. We employ a semantic similarity threshold $\tau$ to filter perturbations. Here, $\tau$ is task-specific and determined through validation. The confidence level for certification is set to \( \alpha = 0.05 \).

For certified robustness evaluation, our primary metrics include the average certified radius \( \mathit{r}_{\text{avg}} \) and the certified accuracy under varying perturbation levels. Certified accuracy at level \( \delta \) is defined as \( \text{CertAcc}(\delta) = \frac{1}{N} \sum_{j=1}^{N} \mathbb{I}(r_j \geq \delta) \), where \( r_j \) is the certified radius of the \( j \)-th input. For empirical robustness evaluation, we use the attack success rate (ASR) against multiple adversarial attack strategies. Additionally, to assess the efficiency, we use the execution time \( \mathit{t} \) and the token cost \( \mathit{c} \) as metrics, which reflect the computational and monetary cost when querying LLMs.

\subsubsection{Baselines}
We compare with two baselines: RanMASK \cite{zeng2021certifiedrobustnesstextadversarial}, a randomized smoothing method without denoising, and SelfDenoise \cite{ji2024advancingrobustnesslargelanguage}, which applies denoising but lacks verification. These highlight the effects of our denoising and clustering modules, respectively. For consistency, both baselines are evaluated with the same \textit{Refine} module as our method (denoted with an asterisk \textbf{*}).

\subsection{Certified Robustness Evaluation}

While prior methods rely on white-box access and often require fine-tuning or retraining, CluCERT instead operates in a black-box setting, interacting with the model solely through API queries. 
\begin{table}[ht]
\centering

\small
\begin{tabular}{lcccccc}
\toprule
\textbf{Method} & \multicolumn{2}{c}{\textbf{SST-2}} & \multicolumn{2}{c}{\textbf{AGNews}} & \multicolumn{2}{c}{\textbf{GSM8K}} \\
\cmidrule(lr){2-3} \cmidrule(lr){4-5} \cmidrule(lr){6-7}
& \( \mathit{r}_{\text{avg}} \) & $Coe$ & \( \mathit{r}_{\text{avg}} \) & $Coe$ &\( \mathit{r}_{\text{avg}} \) & $Coe$ \\
\midrule
RanMASK             & 1.24 & 1.12 & 2.78 & 0.88 & 0.80 & 1.35 \\
SelfDenoise         & 1.38 & 1.07 & 3.08 & 0.82 & 0.92 & 1.25 \\
CluCERT(-Clu)            & 1.51 & 1.02 & 3.38 & 0.79 & 1.05 & 1.15 \\
\textbf{CluCERT}    & \textbf{1.58} & \textbf{1.00} & \textbf{3.51} & \textbf{0.79} & \textbf{1.16} & \textbf{1.07} \\
\bottomrule
\end{tabular}
\caption{
Robustness certificates across datasets. CluCERT(-Clu) refers to our framework without clustering. 
\( \mathit{r}_{\text{avg}} \) denotes the average certified radius (larger indicates stronger robustness), and 
\( \mathit{Coe} \) is the coefficient of variation (smaller indicates higher stability).
}
\label{tab::exe}
\end{table}
This design makes it applicable to closed-source large language models and more reflective of real-world deployment. Specifically, we use ChatGPT-3.5 as the base model and evaluate its certifiable robustness without accessing model weights. In contrast to vision tasks ~\cite{cohen2019certifiedadversarialrobustnessrandomized,carlini2022certified} that typically assume continuous perturbations, textual perturbations are inherently discrete, such as word substitutions. Moreover, certified radii in text are integer-valued, indicating the maximum number of word substitutions that preserve the model’s prediction. To capture robustness under this discrete setting, we report the average certified radius (\( \mathit{r}_{\text{avg}} \)), along with the coefficient of variation ($Coe$), to characterize both the level and stability of model robustness across different perturbation budgets.

Beyond standard text classification tasks, we extend CluCERT to mathematical question answering, a setting that poses greater challenges for robustness due to its deterministic outputs and high sensitivity to input perturbations. As shown in Table~\ref{tab::exe}, CluCERT achieves the highest \( \mathit{r}_{\text{avg}} \) and the lowest $Coe$ across all evaluated datasets, indicating superior and stable certified robustness. On GSM8K, CluCERT attains an \( r_{\text{avg}} \) of 1.16, outperforming RanMASK (0.80) and SelfDenoise (0.92), with the smallest $Coe$ of 1.07. This demonstrates its ability to maintain robustness under structurally complex and fragile input conditions. On AGNews, which features longer inputs and greater semantic diversity, CluCERT again outperforms baselines, achieving \( r_{\text{avg}} = 3.51 \) and $Coe$ = 0.79, showing tolerance to stronger perturbations and consistency across examples. SST-2 exhibits lower certified radii, likely due to shorter inputs and the reliance on a few key emotional tokens, yet CluCERT still delivers the best performance, highlighting its adaptability to different task types.

To further understand the effect of the clustering mechanism, we conduct ablation studies focusing on the clustering-based denoising module. As shown in Figure~\ref{fig:exe}, integrating the synonym clustering strategy consistently improves certified accuracy across various perturbation levels, with particularly notable gains on long-text datasets such as AGNews. These results suggest that clustering mitigates semantic drift introduced by low-quality substitutions and enhances prediction stability. Compared to the variant without clustering, CluCERT yields both higher certified accuracy and smoother robustness curves throughout the entire perturbation range. Importantly, these empirical observations align with our theoretical analysis, confirming the effectiveness of clustering in balancing semantic preservation with perturbation diversity.

\subsection{Empirical Robustness under Attacks}

\begin{table}[t]
\centering
\footnotesize
\setlength{\tabcolsep}{3pt}
\begin{tabular}{@{}lccc@{}}
\toprule
\textbf{AGNews} & Clean Acc. & ASR (TB) & ASR (DWB) \\
\midrule
RanMask              & 82.3 & 47.1 & 42.6 \\
SelfDenoise          & 84.1 & 34.4 & 30.9 \\
CluCERT (w/o Refine) & \textbf{85.0} & 32.7 & 28.5 \\
CluCERT              & 84.3 & \textbf{29.8} & \textbf{26.9} \\
\midrule
\textbf{SST-2} & Clean Acc. & ASR (TB) & ASR (DWB) \\
\midrule
RanMask              & 84.2 & 52.3 & 47.4 \\
SelfDenoise          & 90.4 & 44.6 & 35.7 \\
CluCERT (w/o Refine) & \textbf{91.0} & 45.5 & 33.3 \\
CluCERT              & \textbf{91.0} & \textbf{41.3} & \textbf{31.2} \\
\bottomrule
\end{tabular}
\caption{Clean accuracy and attack success rate (\%) under TextBugger (TB) and DeepWordBug (DWB) attacks on AGNews and SST-2. Lower ASR indicates stronger robustness.}
\label{tab:empirical}
\end{table}

We evaluate CluCERT under two representative black-box attacks, TextBugger and DeepWordBug, using the default settings of the TextAttack toolkit \cite{morris2020textattack}. Experiments are conducted on two widely used classification benchmarks, SST-2 and AGNews. RanMASK and SelfDenoise serve as baselines, and we additionally include a variant of CluCERT without the proposed \textit{Refine} module. Table~\ref{tab:empirical} reports clean accuracy and attack success rate (ASR), where lower ASR indicates stronger robustness.

CluCERT consistently achieves the lowest attack success rates (ASR) across both datasets and attack types, demonstrating robust and stable empirical performance. On SST-2, it maintains the highest clean accuracy (91.0\%) while achieving the lowest ASR, clearly outperforming all baseline methods. On AGNews, CluCERT also exhibits the strongest robustness under adversarial attacks. Although its clean accuracy slightly decreases compared to the variant without the \textit{Refine} module, this trade-off results in a substantial gain in robustness. We attribute this improvement primarily to the \textit{Refine} module. By compressing the input, filtering peripheral content, and emphasizing core semantics, this module helps the model focus on key information, thereby enhancing robustness and prediction stability. While the process may occasionally remove marginally informative tokens or introduce output variance induced by prompt design, it generally reduces input length and minimizes noisy information. This leads to improved resistance to perturbations and increased inference efficiency.

The integration of clustering-based substitution and the \textit{Refine} module enables CluCERT to strike a balance between clean accuracy and adversarial robustness. The consistent drop in ASR across datasets and attack types confirms the effectiveness and generalizability of our approach.

\subsection{Efficiency}

In this section, we analyze the efficiency of the proposed method. As shown in Figure~\ref{fig:time}, we present the estimated time cost of each processing stage when generating 1000 perturbed samples for a single input. Our method achieves approximately 6.8$\times$ speedup compared to the baseline method, significantly reducing the computational overhead for deployment. In addition, the LLM-based substitution stage incurs non-negligible token-level inference costs, which may lead to substantial economic burdens in large-scale text processing scenarios.

\begin{figure}[ht]
    \centering
    \includegraphics[width=0.45\textwidth,trim=0 0 0 60, clip]{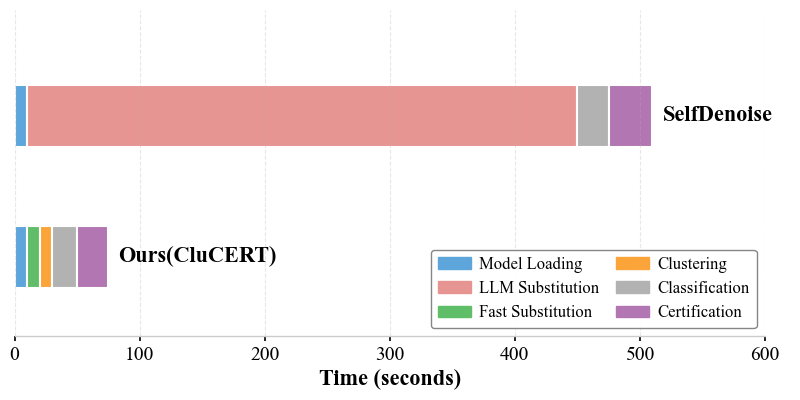}
    \caption{Time cost for each mode from SelfDenoise and CluCERT
}
    \label{fig:time}
\end{figure}

The speedup primarily stems from our fast synonym substitution strategy, which avoids costly LLM-based generation used in prior methods such as SelfDenoise. Overall, our approach achieves a favorable balance between robustness and computational efficiency.

\section{Conclusion}
We propose CluCERT, a clustering-guided and efficient smoothing framework for certifying the robustness of large language models. By combining fast synonym-based perturbation with semantic clustering, CluCERT focuses on meaningful substitutions while improving sampling efficiency, enabling stronger certified bounds with significantly lower computational cost. Extensive experiments on text classification and mathematical reasoning tasks demonstrate its effectiveness and generalizability.
\section*{Acknowledgements}
This work was supported by The Royal Society Grant (Ensuring Trustworthy AI: Robustness Certification for Large Language Models)[Reference RGS\textbackslash R2\textbackslash 252444].
GJ's contribution is supported by the NVIDIA Academic Grant Program. 
The authors would like to acknowledge the use of the University of Exeter High-Performance Computing (HPC) facility in carrying out this work.

\bibliography{aaai2026}

\clearpage
\onecolumn
\appendix
\newpage
\section{Robustness Bound (Theorem 1)}
Our robustness bound extends randomized smoothing to text classification under $\ell_0$-bounded adversarial word substitutions. To model semantic variability in language, we introduce a recovery factor $\gamma$ that quantifies prediction stability under meaning-preserving perturbations. Below, we derive Theorem 1 under this adapted setting.

\begin{proof}
Let $I=\{i:w_i\neq w'_i\}$ be the set of differing positions. By condition, $|I|\leq d$. Throughout, we assume the recovery operator $T(\cdot,\cdot)$ is deterministic given the mask set $\mathcal{T}$.
Define event $A=\{\mathcal{T}\cap I=\emptyset\}$, i.e., the retention set does not include any differing positions.
When event $A$ occurs, for all $i\in\mathcal{T}$, we have $w_i=w'_i$, thus $T(w,\mathcal{T})=T(w',\mathcal{T})$, and consequently $f(T(w,\mathcal{T}))=f(T(w',\mathcal{T}))$.
Calculate $\mathbb{P}[A]$:
\begin{align*}
&\mathbb{P}[\mathcal{T}\cap I=\emptyset]\\
&=\frac{|\{\mathcal{T}:|\mathcal{T}|=s,\,\mathcal{T}\subseteq\{1,\ldots,n\}\setminus I\}|}{|\{\mathcal{T}:|\mathcal{T}|=s\}|}\\
&=\frac{\binom{n-|I|}{s}}{\binom{n}{s}}\geq\frac{\binom{n-d}{s}}{\binom{n}{s}},
\end{align*}
and hence
\begin{align*}
\mathbb{P}[A^c]\leq 1-\frac{\binom{n-d}{s}}{\binom{n}{s}}=\Delta_t.
\end{align*}
Consider the probability difference. Since the distribution of $\mathcal{T}$ does not depend on the input, both smoothed probabilities can be written as expectations over the same $\mathcal{T}$:
\begin{align*}
&|p_c(w)-p_c(w')|\\
&=|\mathbb{E}_{\mathcal{T}}[\mathbb{I}[f(T(w,\mathcal{T}))=c]\\
&\quad-\mathbb{I}[f(T(w',\mathcal{T}))=c]]|\\
&\leq\mathbb{E}_{\mathcal{T}}[|\mathbb{I}[f(T(w,\mathcal{T}))=c]\\
&\quad-\mathbb{I}[f(T(w',\mathcal{T}))=c]|]\\
&\leq\mathbb{P}_{\mathcal{T}}[f(T(w,\mathcal{T}))\neq f(T(w',\mathcal{T}))],
\end{align*}
where the last step holds since the absolute difference of the two indicators equals $1$ only if the two predictions differ.
Define the semantic recovery stability factor:
\begin{align*}
\gamma:=\sup_{\substack{w,w',\mathcal{T}:\,\|w-w'\|_0\leq d,\\ \mathcal{T}\cap I\neq\emptyset}}\mathbb{P}\big[f(T(w,\mathcal{T}))\neq f(T(w',\mathcal{T}))\big]\in[0,1].
\end{align*}
Decompose this probability:
\begin{align*}
&\mathbb{P}[f(T(w,\mathcal{T}))\neq f(T(w',\mathcal{T}))]\\
&=\mathbb{P}[f(T(w,\mathcal{T}))\neq f(T(w',\mathcal{T}))\,|\,A]\cdot\mathbb{P}[A]\\
&\quad+\mathbb{P}[f(T(w,\mathcal{T}))\neq f(T(w',\mathcal{T}))\,|\,A^c]\cdot\mathbb{P}[A^c]\\
&\leq 0\cdot\mathbb{P}[A]+\gamma\cdot\mathbb{P}[A^c]\\
&\leq\gamma\cdot\left(1-\frac{\binom{n-d}{s}}{\binom{n}{s}}\right)=\gamma\cdot\Delta_t,
\end{align*}
where the first conditional probability vanishes since $f(T(w,\mathcal{T}))=f(T(w',\mathcal{T}))$ on event $A$, and the second is bounded by the definition of $\gamma$.
Therefore, we obtain:
\begin{align*}
|p_c(w)-p_c(w')|\leq\gamma\cdot\Delta_t,
\end{align*}
\end{proof}


Following prior work on estimating semantic stability factors such as $\beta$, we approximate the recovery factor $\gamma$ in Theorem~1 using the smoothed prediction probability $p_c(x)$. For certification only the one-sided deviation matters, which is bounded by $\beta:=\mathbb{P}[f(T(x,\mathcal{T}))=c\,|\,\mathcal{T}\cap I\neq\emptyset]\cdot\Delta_t$, depending only on the original input $x$. Assuming the overlap event is approximately independent of the model's prediction on the clean input, we have $\beta\approx p_c(x)$. Hence, we adopt:
$\gamma \approx p_c(x)$
This approximation simplifies computation and remains stable in practice, yielding a practical certificate under the independence assumption.
\section{Prompts and Instructions}

\subsection{Refine Operation}

To identify the most semantically important words for input reduction, we adopt a composite prompting strategy. Specifically, we design a set of diverse prompts that elicit word-level importance judgments from a language model. Each prompt is phrased differently to encourage varied perspectives on which words are most critical to the sentence’s sentiment or meaning. By querying the model multiple times with these prompts, we obtain several ranked lists of word importance. The results are then aggregated, and high-ranking words are selected to construct the refined input. Specifically, Table~\ref{tab:importance-prompts} illustrates the set of representative prompts we used to query the language model for word-level importance judgments. Each prompt is carefully designed to approach the concept of “importance” from slightly different angles, such as sentiment contribution, semantic relevance, or overall meaning. This diversity helps capture a broader perspective on which words are critical or negligible for the sentence’s meaning and ultimately improves the robustness of the refinement step.
\begin{table}[ht]
\centering
\begin{tabular}{p{0.95\linewidth}}
\toprule
\textbf{Representative Prompts for 4-Level Word Importance Classification} \\
\midrule
1. Please classify each word in the sentence into one of four levels based on its importance to the overall sentiment: Very Important, Important, Less Important, or Not Important. \\
\midrule
2. Assign an importance level (Very Important, Important, Less Important, Not Important) to each word depending on how strongly it affects the sentiment of the sentence. \\
\midrule
3. For every word in the sentence, evaluate its influence on the emotional tone and categorize it as either Very Important, Important, Less Important, or Not Important. \\
\midrule
4. Which words carry the core sentiment of the sentence? Please group all words into four categories according to their emotional contribution. \\
\midrule
5. Label each word in the sentence as Very Important, Important, Less Important, or Not Important, based on how necessary it is for understanding the sentence's meaning. \\
\midrule
6. Which words in this sentence are the least meaningful or most negligible in terms of semantic contribution? Please assign all words to four importance levels accordingly. \\
\bottomrule
\end{tabular}
\caption{Examples of prompts used to elicit 4-level word importance judgments from a language model.}
\label{tab:importance-prompts}
\end{table}

\subsection{Downstream prompts}
Below are the actual prompts and instructions we used in downstream tasks. 
\begin{tcolorbox}[
  colback=gray!5!white,     
  colframe=gray!70!black,   
  title=Prompt Template Used for ChatGPT-3.5,
  fonttitle=\bfseries,      
  boxrule=0.5pt,            
  arc=2pt,                  
  left=5pt, right=5pt, top=5pt, bottom=5pt 
]
Below is an instruction that describes a task, followed by an input text. Respond appropriately by completing the task according to the instruction.

\vspace{1em}

\texttt{\#\#\#Instruction:} \\

\vspace{1em}

\texttt{\#\#\#Input:} \\
\{\textit{Insert sentence or input text here}\}

\vspace{1em}

\texttt{\#\#\#Response:}
\vspace{1em}
\end{tcolorbox}
The following descriptions are used to fill in the Instruction section of the prompt. The Input section should be filled with different input texts accordingly.

\newcounter{instruction}

\refstepcounter{instruction}\label{inst:sst2}
\noindent\textbf{Instruction \theinstruction:} Instruction used for \textbf{SST-2} Classification.\\
\textit{Below is an instruction that describes a sentiment classification task, followed by an English sentence as input. Respond with either \texttt{"positive"} or \texttt{"negative"} according to the sentence sentiment.}

\vspace{1em}

\refstepcounter{instruction}\label{inst:agnews}
\noindent\textbf{Instruction \theinstruction:} Instruction used for \textbf{AGNEWS} Topic Classification.\\
\textit{Below is an instruction for classifying a news article into one of four categories. Respond with the correct category name: \texttt{Sports}, \texttt{World}, \texttt{Technology}, or \texttt{Business}.}

\vspace{1em}

\refstepcounter{instruction}\label{inst:gsm8k}
\noindent\textbf{Instruction \theinstruction:} Instruction used for \textbf{GSM8K} Math Problem Solving.\\
Below is an instruction for solving a math word problem. Read the problem and return only the final numeric answer.

\section{Proof of Semantic Clustering for Certified Radius Improvement (Theorem 2)}

\begin{proof}
At the original certified radius $r^*$, the certification condition is satisfied with some margin:
\begin{align*}
p_c(w) - \max_{j \neq c} p_j(w) = 2\gamma\Delta_{r^*} + \delta
\end{align*}
where $\delta \geq 0$. Since $r^*$ is the maximum integer satisfying the certification condition, we have:
\begin{align*}
p_c(w) - \max_{j \neq c} p_j(w) \leq 2\gamma\Delta_{r^*+1}
\end{align*}

This implies:
\begin{align*}
0 \leq \delta \leq 2\gamma(\Delta_{r^*+1} - \Delta_{r^*})
\end{align*}

After semantic clustering, the probability gap becomes:
\begin{align*}
\tilde{p}_c(w) - \max_{j \neq c} \tilde{p}_j(w) &\geq (p_c(w) + \epsilon) - (\max_{j \neq c} p_j(w) - \epsilon)\\
&= p_c(w) - \max_{j \neq c} p_j(w) + 2\epsilon\\
&= 2\gamma\Delta_{r^*} + \delta + 2\epsilon
\end{align*}

To find the new certified radius $\tilde{r}^*$, we need the maximum $t$ such that:
\begin{align*}
2\gamma\Delta_{r^*} + \delta + 2\epsilon > 2\gamma\Delta_t
\end{align*}

Rearranging:
\begin{align*}
\Delta_t < \Delta_{r^*} + \frac{\delta + 2\epsilon}{2\gamma}
\end{align*}

For the asymptotic analysis, when $t \ll \min(s, n-s)$, we have:
\begin{align*}
\Delta_t = 1 - \prod_{i=0}^{s-1} \frac{n-t-i}{n-i} \approx \frac{st}{n-s/2}
\end{align*}

Thus, near $r^*$:
\begin{align*}
\Delta_{t} - \Delta_{r^*} \approx \frac{s(t-r^*)}{n-s/2}
\end{align*}

For $\Delta_t < \Delta_{r^*} + \frac{\delta + 2\epsilon}{2\gamma}$, we need:
\begin{align*}
\frac{s(t-r^*)}{n-s/2} < \frac{\delta + 2\epsilon}{2\gamma}
\end{align*}

Solving for $t$:
\begin{align*}
t < r^* + \frac{(n-s/2)(\delta + 2\epsilon)}{2\gamma s}
\end{align*}

Therefore, the new certified radius is:
\begin{align*}
\tilde{r}^* = r^* + \left\lfloor \frac{(n-s/2)(\delta + 2\epsilon)}{2\gamma s} \right\rfloor
\end{align*}

Since $\delta + 2\epsilon > 0$, we have $\tilde{r}^* > r^*$, completing the proof.
\end{proof}

\end{document}